\documentclass{article}

\usepackage{microtype}
\usepackage{graphicx}
\usepackage{subfigure}
\usepackage{booktabs} 
\usepackage{mathrsfs}

\usepackage{thm-restate}

\usepackage{nicefrac}
\usepackage{xcolor}
\usepackage{amsmath,amssymb,amsthm}

\theoremstyle{definition}
\newtheorem{theorem}{Theorem}
\newtheorem{proposition}[theorem]{Proposition}

\newtheorem{lemma}[theorem]{Lemma}

\usepackage{enumitem}
\setlist[enumerate]{leftmargin=0.5cm,topsep=0pt,itemsep=-2pt}
\setlist[itemize]{leftmargin=0.5cm,topsep=0pt,itemsep=-2pt}

\usepackage{mathrsfs}

\usepackage{mathtools}

\usepackage{hyperref}
\hypersetup{
    colorlinks=true,
    linkcolor=blue,
    citecolor=cyan,
}

\newcommand{\piref}{\pi_{\text{ref}}}
\newcommand{\E}{\mathbb{E}}

\newcommand{\eqdef}{\stackrel{\small{\mathsf{def}}}{=}}
\newcommand{\KL}{\mathbb{KL}}

\usepackage[accepted]{icml2021}

\icmltitlerunning{RL-finetuning LLMs from on- and off-policy data with a single algorithm}

\begin{document}

\twocolumn[
\icmltitle{RL-finetuning LLMs from on- and off-policy data with a single algorithm}

\icmlsetsymbol{equal}{*}

\begin{icmlauthorlist}
\icmlauthor{Yunhao Tang}{meta1,equal}
\icmlauthor{Taco Cohen}{meta2}
\icmlauthor{David W. Zhang}{meta2}
\icmlauthor{Michal Valko}{old_meta}
\icmlauthor{R\'emi Munos}{meta2,equal}

\end{icmlauthorlist}
\icmlaffiliation{meta1}{Meta GenAI}
\icmlaffiliation{meta2}{Meta FAIR}
\icmlaffiliation{old_meta}{Work done while at Meta}

\icmlkeywords{Machine Learning, ICML}

\vskip 0.3in
]

\printAffiliationsAndNotice{\icmlEqualContribution} 
\begin{abstract}

We introduce a novel reinforcement learning algorithm (AGRO, for Any-Generation Reward Optimization) for fine-tuning large-language models. AGRO leverages the concept of generation consistency, which states that the optimal policy satisfies the notion of consistency across any possible generation of the model. We derive algorithms that find optimal solutions via the sample-based policy gradient and provide theoretical guarantees on their convergence. Our experiments demonstrate the effectiveness of AGRO in both on-policy and off-policy settings, showing improved performance on the mathematical reasoning dataset over baseline algorithms.

\end{abstract}

\section{Introduction}

Large language models have displayed powerful capabilities, and upon fine-tuning, can become helpful assistants to human users. Although supervised learning has been a dominant fine-tuning paradigm due to its efficacy and stability, reinforcement learning from human feedback (RLHF) has played an increasingly important role \citep{christiano2017deep,ziegler2019fine,nakano2021webgpt,ouyang2022training,bai2022constitutional}. Although previously it was commonly believed that reinforcement learning (RL) only carries out lightweight fine-tuning on top of an existing (already fine-tuned) model, it has also been proven to be highly effective in entailing new capabilities beyond previous pre-training and post-training paradigms \citep{jaech2024openai,guo2025deepseek,lambert2024t,team2025kimi}.

The existing RL algorithms for language model fine-tuning can be largely categorized into two groups: online and offline. Online RL algorithms consist of iterative sampling and updating of the latest policy model, assuming access to an external reward model that provides the learning signal \citep{ziegler2019fine,ouyang2022training,munos2023nash,guo2024direct,calandriello2024human}. Meanwhile, offline RL algorithms directly extract information from a static offline dataset, bypassing the need for iterative sampling and potentially reward modeling \citep{zhao2023slic,rafailov2024direct,azar2024general,tang2024generalized,richemond2024offline}. There is a clear trade-off between the two classes of algorithms: while offline algorithms are much cheaper to execute, they generally deliver less improvement compared to online algorithms \citep{xu2024dpo,tang2024understanding,tajwar2024preference}.

The dichotomy between online and offline algorithms also makes it difficult to directly adapt algorithms from one case to another. Most importantly, we will see that naively adapting the on-policy KL regularized policy gradient algorithm - one of the most prominent on-policy RL algorithms - to the off-policy case, will not lead to the optimal target policy (Figure~\ref{figure:tabular}). The natural motive is to build an algorithm that works seamlessly for both on-policy and off-policy cases.

In this work, we propose a novel RL algorithm called Any-Generation Reward Optimization (AGRO) for fine-tuning language models. Central to AGRO is the insight that the optimal policy for the RLHF problem satisfies a notion of \emph{generation consistency}, which we will detail later. This consistency is rooted in the regularized policy optimization that underlies the RLHF formulation. This notion of consistency allows us to derive algorithms that leverage both on-policy and off-policy data, and find optimal policy via sample-based learning with theoretical guarantees. Our technical contributions are detailed as follows.
\begin{itemize}
    \item We identify generation consistency (Section~\ref{sec:consistency}), a notion of optimality condition inherent to the regularized policy optimization problem. Given this, we propose a few loss functions, with which we can search for the optimal policy. These losses are also conducive to sample-based gradient estimations (Section~\ref{sec:gradient}).
    \item We propose Any-Generation Reward Optimization (AGRO, Section~\ref{sec:agro}), which is derived from the consistency condition and can leverage both on-policy and off-policy data for policy optimization. We also discuss subtlety in implementing AGRO at the token level for LLM applications (Section~\ref{sec:token.level.gradient.estimate}).
    \item We conclude with experimental ablations that showcase the competitive performance of AGRO in both on-policy and off-policy learning settings, with a focus on the mathematical reasoning domain (Section~\ref{sec:exp}). We also ablate on important hyper-parameters that produce actionable insights for practitioners.
\end{itemize}

\section{Reinforcement learning for language model}

A language model can be understood as a policy $\pi$ in the context of RL \citep{sutton1998}. Given a prompt $x\in \mathcal{X}$, the policy produces a response (or generation) $y\in \mathcal{Y}$, which then gets evaluated by a certain reward function $r(x,y)$, which usually reflects human preference. The objective is to optimize $\pi$ such that that the expected reward is maximized: depending on the application, the reward function can be a parametric network \citep{christiano2017deep,ziegler2019fine,ouyang2022training} or programmatic \citep{lightman2023let,uesato2022solving,gehring2024rlef}. Formally, consider the regularized policy optimization problem to maximize the regularized objective 
\begin{align}
    {\cal G}(\pi)\eqdef \E_{x\sim\rho}\left[\E_{y\sim \pi(\cdot|x)}\left[ r(x,y)\right] - \beta \KL(\pi(\cdot|x), \piref(\cdot|x))\right].
\label{eq:regularized.objective}
\end{align}
By construction, the objective aims to maximize the average reward subject to the KL regularization penalty $\mathbb{KL}\left(\pi(\cdot|x),\piref(\cdot|x)\right)\eqdef\mathbb{E}_{y\sim \pi(\cdot|x)}\left[\log \frac{\pi(y|x)}{\piref(y|x)}\right]$ that encourages $\pi$ to stay close to some reference policy $\piref$ during optimization. Here, $\rho$ denotes the sampling distribution over the space of prompt $X$; the coefficient $\beta\geq 0$ determines the trade-off between the policy optimization and the regularization. The reference policy $\piref$ is usually the supervised finetuned policy and the starting point of the optimization.

\section{Generation consistency for regularized RL}\label{sec:consistency}

We start with the key observation that a notion of consistency condition is inherent to the optimal solution of the regularized RL (RLHF) problem. Formally, this result can be stated as follows.
\begin{theorem} (\textbf{Generation consistency at optimality}) \label{thm:consistency}
Let $\pi^* \eqdef \arg\max_\pi {\cal G}(\pi)$ be the optimal policy to the RLHF problem defined by Eq.~\eqref{eq:regularized.objective}. 
Then for any generation $y$, the following quantity computed per $(x,y)$,
\begin{align*}
    r(x,y) - \beta \log \frac{\pi^\ast(y|x)}{\piref(y|x)}
\end{align*}
does not depend on $y$. It is a function of the prompt $x$ only.
\end{theorem}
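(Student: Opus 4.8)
The plan is to solve the regularized optimization problem explicitly and read off the consistency condition from the closed-form optimal policy. Since the objective $\mathcal{G}(\pi)$ decomposes as an expectation over prompts $x\sim\rho$, and the KL term as well as the inner expected reward are computed per-prompt, I would first observe that the maximization can be carried out independently for each fixed $x$. That is, for each $x$ we solve $\max_{\pi(\cdot|x)}\left\{\E_{y\sim\pi(\cdot|x)}[r(x,y)] - \beta\,\KL(\pi(\cdot|x),\piref(\cdot|x))\right\}$ over the simplex of probability distributions on $\mathcal{Y}$.

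The key step is to maximize this per-prompt objective subject to the normalization constraint $\sum_y \pi(y|x) = 1$. I would introduce a Lagrange multiplier $\lambda(x)$ for the constraint and write the Lagrangian, then differentiate with respect to $\pi(y|x)$ for each fixed $y$. Writing the objective as $\sum_y \pi(y|x)\bigl(r(x,y) - \beta\log\frac{\pi(y|x)}{\piref(y|x)}\bigr)$, the stationarity condition yields, after differentiation, something of the form $r(x,y) - \beta\log\frac{\pi^\ast(y|x)}{\piref(y|x)} - \beta = \lambda(x)$ for every $y$ in the support. Rearranging gives the well-known Gibbs/softmax form $\pi^\ast(y|x) \propto \piref(y|x)\exp\!\bigl(r(x,y)/\beta\bigr)$, with normalizing constant $Z(x) = \sum_{y'}\piref(y'|x)\exp(r(x,y')/\beta)$.

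Given this closed form, the conclusion is immediate: taking logarithms gives $\beta\log\frac{\pi^\ast(y|x)}{\piref(y|x)} = r(x,y) - \beta\log Z(x)$, so that
\begin{align*}
    r(x,y) - \beta\log\frac{\pi^\ast(y|x)}{\piref(y|x)} = \beta\log Z(x),
\end{align*}
which manifestly depends on $x$ alone and not on $y$. This is exactly the claimed consistency statement, with the prompt-dependent constant identified as $\beta\log Z(x)$ (equivalently, the soft value of the prompt $x$).

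The main obstacle I anticipate is a matter of rigor rather than of the core computation: justifying that the unconstrained-looking first-order condition genuinely characterizes the global maximizer. Concretely, I would need to argue that the objective is strictly concave in $\pi(\cdot|x)$ (the expected-reward term is linear and the negative KL term is strictly concave, so their sum is strictly concave), that the feasible set is the compact convex simplex, and hence that the stationary point found via the Lagrangian is the unique global optimum; I would also note that the resulting $\pi^\ast$ has full support whenever $\piref$ does, so the nonnegativity constraints are inactive and do not require separate KKT multipliers. Handling the interchange of the supremum with the outer expectation over $x$ (so that per-prompt optimization yields the global optimizer) is routine since the prompt distribution $\rho$ is fixed and the per-prompt problems are decoupled, but it is worth stating explicitly.
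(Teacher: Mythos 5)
Your proposal is correct and follows essentially the same route as the paper: both arguments rest on the closed form $\pi^\ast(y|x)\propto \piref(y|x)e^{r(x,y)/\beta}$ and then read off that $r(x,y)-\beta\log\frac{\pi^\ast(y|x)}{\piref(y|x)}=\beta\log Z(x)$ depends on $x$ alone. The only difference is that the paper cites this closed form as well known while you derive it via strict concavity and a Lagrange multiplier, which makes your version more self-contained but not a different proof.
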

\begin{proof}
    It is well-known that the optimal policy has an analytic form \citep{rafailov2024direct,calandriello2024human,richemond2024offline}: for all $x,y$,
\begin{equation}\label{eq:pi.star}
\pi^*(y|x) = \frac{\piref(y|x) e^{\frac 1\beta r(x,y)}}{e^{\frac 1\beta \widetilde{V}^{\pi^*}(x)}},
\end{equation}
where the normalizing constant is
$$\widetilde{V}^{\pi^*}(x)\eqdef \beta \log \Big( \sum_{y\in \mathcal{Y}} \piref(y|x) e^{\frac 1\beta r(x,y)}\Big).$$ 
We can verify that for any $y$, $r(x,y) - \beta \log \frac{\pi^\ast(y|x)}{\piref(y|x)}=\widetilde{V}^{\pi^\ast}(x)$, which is a function of the prompt $x$ only.
\end{proof} 

\paragraph{Remark.} We can also introduce the traditional value function which is defined as the expected reward $V^{\pi}(x) \eqdef \sum_{y\in \mathcal{Y}} \pi(y|x) r(x,y)$ \citep{sutton1999}Note that by plugging $\pi^*$ back into Equation~\eqref{eq:regularized.objective} we have the property that: 
\begin{eqnarray*}
\widetilde{V}^{\pi^*}(x) &=& V^{\pi^*}(x) - \beta \KL(\pi^*(x), \piref(x)).
\end{eqnarray*}
That is, the normalizing constant $\widetilde{V}^{\pi^*}(x)$
is indeed a KL-regularized value function \citep{ziebart2008maximum,nachum2017bridging,levine2018reinforcement}.

\begin{figure}[t]
\centering
\includegraphics[width=3in]{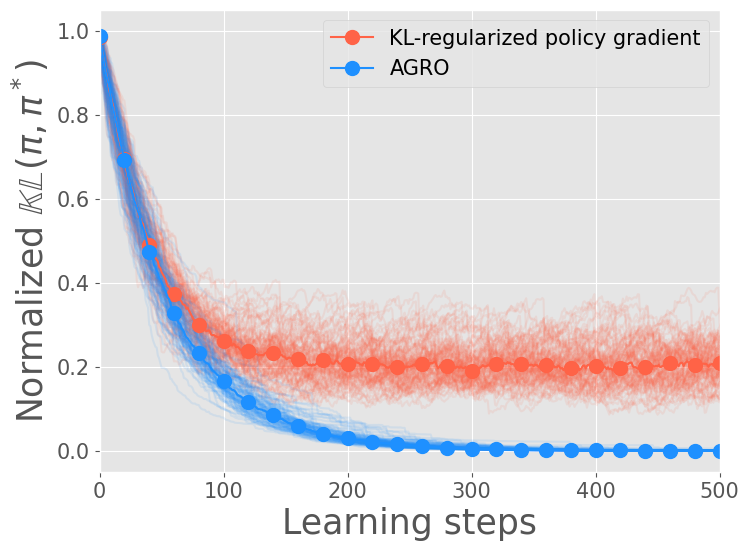}
\caption{\small{KL-regularized policy gradient vs. AGRO in the tabular case with off-policy data (data generated from $\piref$ rather than $\pi$). We measure the normalized KL-divergence  $\mathbb{KL}(\pi,\pi^*)$ between $\pi$ and the optimal policy $\pi^*$ during training. We see that under off-policy data, regularized KL-regularized policy gradient does not converge to the optimal policy $\pi^*$ while AGRO converges as evidenced by the vanishing KL divergence.}}
\label{figure:tabular}
\end{figure}

\subsection{Deriving loss functions from generation consistency}
Henceforth for simplicity, we introduce the regularized reward which depends on the policy $\pi$,
\begin{align*}
    R^\pi_\beta(x,y)\eqdef r(x,y) - \beta \log\frac{\pi(y|x)}{\piref(y|x)}.
\end{align*}
Theorem~\ref{thm:consistency} implies the following property on the regularized reward $R_\beta^{\pi^*}$ of the optimal policy: for any generation $y$, 
\begin{align}\label{eq:pi*.equation}
R^{\pi^*}_\beta(x,y) - \E_{y'\sim\mu(\cdot|x)} \left[ R^{\pi^*}_\beta(x,y') \right] = 0.
\end{align}
where $\mu(\cdot|x)$ is any distribution over $\mathcal{Y}$ that support beyond just the sample $y$. Naturally, we can design a squared loss that enforces the above equality in order to find the optimal policy, i.e., we minimize with respect to $\pi$, from every $x$,
\begin{align*}
    \mathbb{E}_{y\sim \pi(\cdot|x)}\left[\left(R^{\pi}_\beta(x,y) - \E_{y'\sim\pi(\cdot|x)} \left[ R^{\pi}_\beta(x,y') \right]\right)^2 \right].
\end{align*}
Note that if we think of $R^{\pi}_\beta(x,y)$ as a random variable, the above can be understood as its variance under the distribution $y\sim \mu(\cdot|x)$. As a result, the loss can simplify as 
\begin{align*}
     {\mathbb V}_{y\sim\mu(\cdot|x)}\left(R^{\pi}_\beta(x,y)\right).
\end{align*}

Finally, we can define the aggregate loss as the expectation (over $x$). We start with the on-policy loss where we let $\mu=\pi$, and define the 
\emph{on-policy loss}: 
$${\cal L}(\pi) \eqdef \frac 12 
{\E}_{x\sim\rho}\left[ {\mathbb V}_{y\sim\pi(\cdot|x)}\left(R^{\pi}_\beta(x,y)\right)\right].
$$
Note that both the regularized reward $R^{\pi}_\beta$ and the sampling depend on $\pi$. Similarly, in the off-policy case (e.g., responses $y$ have been generated by some behavior policy $\mu$), e.g. when using a fixed dataset or a replay buffer, we define the \emph{off-policy loss} where $\mu$ has no optimization dependency on $\pi$:
$${\cal L}_\mu(\pi) \eqdef \frac 12 
{\E}_{x\sim\rho}\left[ {\mathbb V}_{y\sim\mu(\cdot|x)}\left(R^{\pi}_\beta(x,y)\right)\right].
$$

Importantly, we show that globally minimizing both losses will lead to the optimal policy.

\begin{theorem} (\textbf{The unique global minimizer is the optimal policy})
Consider the space $\Pi$ of policies that have the same support as $\piref(\cdot|x)$ on all states $x\in \mbox{support}(\rho)$. Assume $\mu\in\Pi$. Then $\pi^*$ is the unique global minimum  in $\Pi$ of both ${\cal L}(\pi)$ and ${\cal L}_\mu(\pi)$ on $\mbox{support}(\rho)$.  
\end{theorem}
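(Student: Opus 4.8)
The plan is to exploit the non-negativity of the two losses and to identify $\pi^*$ as the unique point where they vanish. First I would note that both ${\cal L}(\pi)$ and ${\cal L}_\mu(\pi)$ are $\rho$-averages of variances, hence non-negative, so their global minimum value is at least $0$. By Theorem~\ref{thm:consistency} the regularized reward of the optimal policy satisfies $R^{\pi^*}_\beta(x,y) = \widetilde{V}^{\pi^*}(x)$, which is constant in $y$; its variance under any sampling distribution—in particular under $\pi^*$ itself and under $\mu$—is therefore zero, so ${\cal L}(\pi^*) = {\cal L}_\mu(\pi^*) = 0$. This establishes that $0$ is the global minimum and that $\pi^*$ attains it.

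For uniqueness, I would take any $\pi\in\Pi$ with zero loss and argue that it must coincide with $\pi^*$ on $\text{support}(\rho)$. Since the integrand is a non-negative variance, vanishing of the $\rho$-average forces $\mathbb{V}_{y\sim\mu(\cdot|x)}(R^\pi_\beta(x,y)) = 0$ (with $\mu$ replaced by $\pi$ in the on-policy case) for $\rho$-almost-every $x$. Here the support hypothesis $\mu\in\Pi$ (resp.\ $\pi\in\Pi$) is essential: because $\mu$ and $\pi$ share the support of $\piref(\cdot|x)$, and every atom of this support carries positive mass, zero variance is equivalent to $R^\pi_\beta(x,\cdot)$ being constant on the whole of $\text{support}(\piref(\cdot|x))$; call this constant $c(x)$. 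This is the one step where the on-policy and off-policy cases are handled by identical reasoning, which is the conceptual payoff of the consistency formulation.

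It remains to invert the relation. Unfolding the definition $R^\pi_\beta(x,y) = r(x,y) - \beta\log\frac{\pi(y|x)}{\piref(y|x)} = c(x)$ gives $\pi(y|x) = \piref(y|x)\,e^{\frac 1\beta (r(x,y) - c(x))}$ for every $y$ in the support (this is where $\beta>0$ enters). Imposing the normalization $\sum_{y} \pi(y|x) = 1$ then pins down the free constant, and a direct computation yields $e^{\frac 1\beta c(x)} = \sum_{y} \piref(y|x) e^{\frac 1\beta r(x,y)}$, i.e.\ $c(x) = \widetilde{V}^{\pi^*}(x)$; substituting back recovers exactly the closed form of Eq.~\eqref{eq:pi.star}, so $\pi(\cdot|x) = \pi^*(\cdot|x)$ for $\rho$-almost-every $x$, hence on $\text{support}(\rho)$. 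I expect the only real subtlety to be the bookkeeping around the support assumption—ensuring zero variance yields constancy on \emph{all} of $\text{support}(\piref(\cdot|x))$ rather than a proper subset—together with the observation that $c(x)$ is not free but is fixed by normalization; without the latter the vanishing-variance condition would determine $\pi$ only up to a state-dependent multiplicative factor.
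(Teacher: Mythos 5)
Your proposal is correct and follows essentially the same route as the paper: non-negativity plus Theorem~\ref{thm:consistency} give that $\pi^*$ attains the minimum value $0$, and then zero variance together with the shared-support assumption forces $R^\pi_\beta(x,\cdot)$ to be constant on $\mbox{support}(\piref(\cdot|x))$, i.e.\ $\pi(y|x)\propto \piref(y|x)e^{\frac 1\beta r(x,y)}$, which by Eq.~\eqref{eq:pi.star} identifies $\pi$ with $\pi^*$. Your only departure is that you spell out explicitly how normalization pins the constant $c(x)$ to $\widetilde{V}^{\pi^*}(x)$, a step the paper leaves implicit in the proportionality argument.
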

\begin{proof}
We provide the proof here as the argument is quite constructive. First notice that ${\cal L}(\pi)\geq 0$ and ${\cal L}_{\mu}(\pi)\geq 0$, and from \eqref{eq:pi*.equation} we have that ${\cal L}(\pi^*)= 0$ and ${\cal L}_{\mu}(\pi^*)=0$. Thus $\pi^*$ is a global minimum of both ${\cal L}(\pi)$ and ${\cal L}_\mu(\pi)$.

Now, let's prove that $\pi^*$ is the unique global minimum of both ${\cal L}(\pi)$ and ${\cal L}_\mu(\pi)$. The losses  ${\cal L}(\pi)$ and ${\cal L}_{\mu}(\pi)$ are the expectation (under $x\sim\rho$) of the variance 5of $R^\pi_\beta(x,y)$ when $y\sim\pi(\cdot|x)$ (respectively when $y\sim\mu(\cdot|x)$).

For any policy $\pi\in\Pi$, for any $x\in\mbox{support}(\rho)$ we have that $\pi(\cdot|x)$ has the same support as $\piref(\cdot|x)$, which, from Eq.~\ref{eq:pi.star}, is the same as the support of $\pi^*(\cdot|x)$. Thus, for the on-policy loss, for any $x\in\mbox{support}(\rho)$, this variance is zero if and only if for any $y\in \mbox{support}(\pi(\cdot|x))$
(respectively $y\in \mbox{support}(\mu(\cdot|x))$ for the off-policy loss), the term $R^\pi_\beta(x,y)$ is independent of $y$ (but can be a function of $x$), which is equivalent to say that 
$$\pi(y|x)\propto \piref(y|x) e^{\frac 1\beta r(x,y)}.$$
From \eqref{eq:pi.star} we deduce that $\pi(y|x)=\pi^*(y|x)$. Thus $\pi^*$ is the unique global minimum in $\Pi$ of both ${\cal L}(\pi)$ and ${\cal L}_{\mu}(\pi)$ on the support of $\rho$.   
\end{proof}

\section{Gradient of the consistency losses}\label{sec:gradient}

We now analyze the gradient of the loss functions above.

\subsection{Gradient of the off-policy loss}\label{sec:off.policy.loss}
Throughout, we consider a parametric policy $\pi$ and let $\nabla$ denote the gradient with respect to its parameters. We start with the gradient of the off-policy loss ${\cal L}_\mu(\pi)$.
\begin{lemma} (\textbf{Off-policy gradient}) \label{lemma:off-policy-grad}
The gradient of the off-policy loss ${\cal L}_{\mu}(\pi)$ with respect to the policy parameter is:
\begin{align*}
\nabla {\cal L}_\mu(\pi) &= 
- \beta 
\E
\left[ \left(R^\pi_\beta(x,y)-R^\pi(x,\mu)\right)  \nabla\log \pi(y|x) \right],
\end{align*}
\end{lemma}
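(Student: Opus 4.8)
The plan is to differentiate ${\cal L}_\mu(\pi)$ directly, exploiting the structural feature of the off-policy setting: the sampling distribution $\mu(\cdot|x)$ carries no dependence on the policy parameters. This is precisely what makes the computation clean, since the only parameter dependence then sits inside the integrand $R^\pi_\beta(x,y)$, and no score-function (REINFORCE) term arises from differentiating through the sampling measure — in sharp contrast to the on-policy loss where $\mu=\pi$ and an extra term would appear.

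First I would fix a prompt $x$ and expand the inner variance as ${\mathbb V}_{y\sim\mu}(R^\pi_\beta) = \E_{y\sim\mu}[(R^\pi_\beta(x,y))^2] - (\E_{y\sim\mu}[R^\pi_\beta(x,y)])^2$. Because $\mu$ is held fixed, the gradient passes inside both expectations, giving $\nabla {\mathbb V}_{y\sim\mu}(R^\pi_\beta) = 2\,\E_\mu[R^\pi_\beta \nabla R^\pi_\beta] - 2\,\E_\mu[R^\pi_\beta]\,\E_\mu[\nabla R^\pi_\beta]$. Since the mean $R^\pi(x,\mu)\eqdef\E_{y'\sim\mu(\cdot|x)}[R^\pi_\beta(x,y')]$ is constant in $y$, I can factor it out of the second term and recombine into the centered form $\nabla {\mathbb V}_{y\sim\mu}(R^\pi_\beta) = 2\,\E_\mu\big[(R^\pi_\beta(x,y) - R^\pi(x,\mu))\,\nabla R^\pi_\beta(x,y)\big]$, which already exhibits the advantage-like factor appearing in the claim.

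Second I would compute $\nabla R^\pi_\beta(x,y)$. Writing $R^\pi_\beta(x,y) = r(x,y) - \beta\log\pi(y|x) + \beta\log\piref(y|x)$ and noting that neither $r(x,y)$ nor $\piref$ depends on the policy parameters, the gradient collapses to $\nabla R^\pi_\beta(x,y) = -\beta\,\nabla\log\pi(y|x)$. Substituting this into the centered expression gives $\nabla {\mathbb V}_{y\sim\mu}(R^\pi_\beta) = -2\beta\,\E_\mu\big[(R^\pi_\beta(x,y) - R^\pi(x,\mu))\,\nabla\log\pi(y|x)\big]$.

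Finally I would take the outer expectation over $x\sim\rho$ and use the factor $\tfrac12$ in the definition of ${\cal L}_\mu$ to cancel the $2$, producing exactly $\nabla {\cal L}_\mu(\pi) = -\beta\,\E\big[(R^\pi_\beta(x,y) - R^\pi(x,\mu))\,\nabla\log\pi(y|x)\big]$, with the expectation understood over $x\sim\rho,\, y\sim\mu(\cdot|x)$. I do not expect a serious obstacle: the only step needing minor care is justifying the interchange of gradient and expectation (standard dominated-convergence regularity, valid on the common support $\Pi$ where $\log\pi$ is well-behaved), and the one conceptual point worth flagging is exactly why no additional term from the sampling distribution survives — it is because $\mu$ is frozen, which is the essential distinction from the on-policy case.
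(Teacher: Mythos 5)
Your proposal is correct and follows essentially the same route as the paper: differentiate the variance with $\mu$ frozen, use $\nabla R^\pi_\beta(x,y) = -\beta\nabla\log\pi(y|x)$, and observe that the contribution from differentiating the baseline $R^\pi_\beta(x,\mu)$ vanishes (you realize this by expanding the variance as $\E[R^2]-(\E[R])^2$ and recombining, while the paper centers first and notes $\E_{y\sim\mu}[(R^\pi_\beta(x,y)-R^\pi_\beta(x,\mu))\nabla R^\pi_\beta(x,\mu)]=0$, but these are the same cancellation). Your remark that no score-function term appears precisely because $\mu$ does not depend on the policy parameters is the correct conceptual point distinguishing this from the on-policy gradient.
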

where the sampling is $x\sim\rho, y\sim \mu(\cdot|x)$. Henceforth, we detail proof of theoretical results in Appendix~\ref{sec:appx.proofs}.
Notice that we have subtracted the \emph{baseline} 
\begin{align*}
R^\pi_\beta(x,\mu)\eqdef \E_{y'\sim\mu(\cdot|x)} \big[ R^\pi_\beta(x,y') \big]
\end{align*}
 from the regularized reward $R^\pi_\beta(x,y)$. This baseline term stems from the definition of the objective ${\cal L}_\mu(\pi)$ itself, and we caution that it is not zero-mean in general
\begin{align*}
\E_{y\sim\mu(\cdot|x)} \left[
R^\pi_\beta(x,\mu)
\nabla \log\pi(y|x)
\right]  \neq 0,
\end{align*}
unless the sampling is on-policy $\mu=\pi$, in which case it can be understood as a control variate for variance reduction.

\paragraph{Equivalence to RLHF gradient when on-policy.}
We can interpret the above gradient as updating with respect to the regularized reward $R_\beta^\pi(x,y)$. In fact,  we can further break down the gradient for prompt $x$ as
\begin{align*}
    -\beta \mathbb{E}_{y\sim \mu(\cdot|x)}\!\left[r(x,y)\nabla \log \pi(y|x) -  \frac{\beta}{2}  \nabla\!\Big(\log\frac{\pi(y|x)}{\piref(y|x)}\Big)^2\right]
\end{align*}
where the first term maximizes reward while the second term minimizes the squared loss divergence between $\pi$ and $\piref$. When the sampling is on-policy $\mu=\pi$, the gradient of the squared loss is aligned with the gradient of the KL regularization \citep{calandriello2024human,tang2024generalized}. In this case, the off-policy AGRO gradient is equivalent to the RLHF gradient from Eqn~\eqref{eq:regularized.objective}.

\subsection{Gradient of the on-policy loss}
When differentiating the on-policy loss ${\cal L}(\pi)$ with respect to the policy parameter we need to take into account the fact that the sampling distribution depends on the parameter as well. Writing $f(x, y, \pi) = \frac 12 \left( 
R^\pi_\beta(x,y) - R^\pi_\beta(x,\pi)
\right)^2$, the gradient thus contains two terms: the pathwise-derivative (PD) estimate and the likelihood ratio (LR) estimate \citep{glasserman2004monte}:
\begin{align*}
\nabla {\cal L}(\pi) = \underbrace{
\E 
\left[ \nabla f(x, y, \pi) \right]
}_{\nabla_\text{PD} {\cal L}(\pi)} + \underbrace{
\E 
\left[ f(x, y, \pi)\nabla\log\pi(y|x)\right]}_{\nabla_\text{LR} {\cal L}(\pi)},
\end{align*}
where the expectation is under $x\sim \rho,y\sim \pi(\cdot|x)$. 
The pathwise-derivative $\nabla_\text{PD} {\cal L}(\pi) $ is the gradient of the off-policy loss $\nabla {\cal L_\mu}(\pi)$ when setting $\mu=\pi$, i.e. $\nabla {\cal L_\mu}(\pi)|_{\mu=\pi}$,
\begin{align*}
&= - \beta 
\E_{ \! \! \! \tiny
\begin{array}{l}
x\sim\rho \\
y\sim\pi(\cdot|x)
\end{array} 
} \! \! \! \! \! 
\left[ R^\pi_\beta(x,y) \nabla\log \pi(y|x) \right].\\
&= - \beta 
\E_{ \! \! \! \tiny
\begin{array}{l}
x\sim\rho \\
y\sim\pi(\cdot|x)
\end{array} 
} \! \! \! \! \! 
\left[ \left( R^\pi_\beta(x,y) - R^\pi_\beta(x,\pi)\right) \nabla\log \pi(y|x) \right].
\end{align*}

Now, the likelihood ratio estimate $\nabla_\text{LR} {\cal L}(\pi)$ is 
\begin{align*}
& \frac 12 
\E_{ \! \! \! \tiny
\begin{array}{l}
x\sim\rho \\
y\sim\pi(\cdot|x)
\end{array} 
} \! \! \! \! \! 
\left[ \left( 
R^\pi_\beta(x,y) - R^\pi_\beta(x,\pi)
\right)^2 \nabla\log\pi(y|x)\right].
\end{align*}
Notice again that we can substract as baseline the variance of $R^\pi_\beta(x,y)$ as a possible variance reduction technique without introducing bias, leading to this estimate of $\nabla_\text{LR} {\cal L}(\pi)$:
{\small 
\begin{align*}
\frac 12 
\E_{ \! \! \! \tiny
\begin{array}{l}
x\sim\rho \\
y\sim\pi(\cdot|x)
\end{array} 
} \! \! \! \! \! 
\left[ \left( \left( R^\pi_\beta(x,y) - R^\pi_\beta(x,\pi)
\right)^2 - b(x) \right) \nabla\log\pi(y|x)\right],
\end{align*}
} 

\vspace{-3mm}
where the baseline is $b(x)\eqdef{\mathbb V}_{y'\sim\pi(\cdot|x)} \left( R^\pi_\beta(x,y') \right) $ which we can approximate with samples. Finally, the full gradient of the on-policy loss is
\begin{align*}
\nabla {\cal L}(\pi) &= \nabla_\text{PD} {\cal L}(\pi) + \nabla_\text{LR} {\cal L}(\pi) 
\end{align*}

We provide a few remarks as follows.
\paragraph{Alignment of $\nabla_\text{PD} {\cal L}(\pi) $ with the gradient of the original objective.} Notice that the pathwise-derivative estimate is aligned with the gradient of the original policy objective ${\cal G}$:
$$\nabla_\text{PD} {\cal L}(\pi) = \nabla {\cal L_\mu}(\pi)|_{\mu=\pi} = -\beta \nabla {\cal G}(\pi).$$

\paragraph{Different learning dynamics, same optimum.} Notice that since both $\pi\mapsto {\cal G}(\pi)$ and $\pi\mapsto {\cal L}(\pi)$ have $\pi^*$ as optimum, by following the full gradient $\nabla {\cal L}(\pi)$ or the pathwise-derivative $\nabla_\text{PD} {\cal L}(\pi)$ only will both lead to the optimal policy $\pi^*$. However the learning dynamics generated by these two gradients will be different.

\paragraph{Gradient magnitude.}
    It may seem that $\nabla_\text{PD} {\cal L}(\pi)$ is negligible compared to $\nabla_\text{LR} {\cal L}(\pi)$, specially when $\beta$ is small. However, rewriting $R^\pi_\beta(x,y) = r(x,y)-\beta\log\frac{\pi(y|x)}{\piref(y|x)} = \beta\log\frac{\pi^*(y|x)}{\pi(y|x)} + \widetilde{V}^{\pi^*}(x)$, we see that $R^\pi_\beta(x,y)- R^\pi_\beta(x,\pi)=\beta\left(\log\frac{\pi^*(y|x)}{\pi(y|x)} + \KL(\pi(x), \pi^*(x)) \right)$ is of order $O(\beta)$, thus both $\nabla_\text{PD} {\cal L}(\pi)$ and $\nabla_\text{LR} {\cal L}(\pi)$ are of order $\mathcal{O}(\beta^2)$. 

\section{Any-Generation Reward Optimization}\label{sec:agro}

We introduce the Any-Generation Reward Optimization (AGRO) algorithm, which finds the optimal policy by leveraging the any generation consistency. We introduce a few variants of the algorithm.
Throughout, we assume that we can collect samples $\{x, (y_i, r_i)_{1\leq i\leq n} \}$ composed of prompts $x\sim\rho$, and for each $x$, one or several responses $y_1,\dots, y_n$ generated by some behavior policy $\mu$ (when off-policy) of by the current policy $\pi$ (when on-policy). We write the  corresponding rewards $\{ r_i = r(x,y_i)\}_{1\leq i\leq n}$.

\subsection{The off-policy AGRO algorithm}

We discuss a few special cases by varying the number of samples $n$ in the algorithm.

\paragraph{Single generation per prompt $n=1$.}
When $n=1$ we only have access to a single response per prompt. In that case one way to minimize the loss $\pi\mapsto {\cal L}_\mu(\pi)$ is to introduce a prompt-dependent function $V(x)$ and jointly minimize over $\pi$ and $V$ the loss ${\cal L}_\mu(\pi, V) $ defined as follows
$$
\E_{ \! \! \! \tiny
\begin{array}{l}
x\sim\rho \\
y\sim\mu(\cdot|x)
\end{array} 
} \! \! \! \! \! 
\left[ \left( r(x,y) - V(x) - \beta \log \frac{\pi(y|x)}{\piref(y|x)} 
\right)^2
\right].
$$
This loss has been introduced in \citep{richemond2024offline} where it was proven that the global optimum (both over $\pi$ and $V$) leads to the optimal policy $\pi^*$. The globally minimizing value function is also the normalizing constant $\widetilde{V}^{\pi^*}(x)$. In this paper we focus on the case $n\geq 2$ for which we avoid having to learn an auxiliary value function. In practice, this might be more desirable due to relative implementation simplicity.

\paragraph{Several generations per prompt.}
Assume we now have $n>1$ responses $y_1,\dots, y_n$ issued from the same prompt $x\sim\rho$. 
We do not have to learn an auxiliary value function of the prompt like in the previous paragraph. Instead we can build an unbiased estimate of the loss $\pi\mapsto {\cal L}_\mu(\pi)$ by approximating the expectation by an empirical average. 

Off-policy AGRO algorithm consists in following the gradient estimate $\nabla \widehat {\cal L}_\mu(\pi)\eqdef$
\begin{align}\label{eq:gradient.empirical.loss}
- \frac{\beta}{n} \sum_{i=1}^n \left( R^\pi_\beta(x,y_i) - \underbrace{\widehat R_{\beta, -i}^{\pi}(x,\mu)}_{\small\mbox{baseline}} \right)\nabla\log\pi(y_i|x),
\end{align}
where the baseline is defined as the leave-one-out average \citep{kool2019buy}:
$$ \widehat R_{\beta, -i}^\pi(x,\mu) \eqdef \frac{1}{n-1}\sum_{j\neq i} \left(r_j - \beta \log\frac{\pi(y_j|x)}{\piref(y_j|x)}\right).$$
It does not introduce any bias but usually helps reduce the variance of the estimate.  When the sampling is on-policy $\mu=\pi$, the above gradient estimate is akin to REINFORCE with leave-one-out baseline \citep{ahmadian2024back,shao2024deepseekmath}.

\paragraph{The specific case of two samples.}
Assume we now have $n=2$ generations $y_1$ and $y_2$ from the same prompt $x\sim\rho$. The empirical gradient $\nabla \widehat {\cal L}_\mu(\pi)$ is
\begin{align*}
-\beta \left( r_1 - r_2 -\beta \log\frac{\pi(y_1|x)\piref(y_2|x)}{\pi(y_2|x)\piref(y_1|x)} \right)\nabla\log\frac {\pi(y_1|x)}{\pi(y_2|x)}.
\end{align*}
This algorithm resembles the contrastive policy gradient algorithm of \cite{flet2024contrastive} and the IPO algorithm of \cite{azar2024general} where the preference $p(y_1\succ y_2)$ is replaced by the reward difference $r_1-r_2$.

\paragraph{Comparison of off-policy AGRO vs. KL-regularized policy gradient.} To illustrate the key property of off-policy AGRO, we compare with an adaptation of KL-regularized policy gradient algorithm to the off-policy learning case. Concretely, the $n$-sample KL-regularized policy gradient is implemented as
\begin{align*}
    \frac{1}{n}\sum_{i=1}^n \left(r_i - \bar{r}_{-i}\right) \nabla \log \pi(y_i|x) - \beta \nabla  \mathbb{KL}\left(\pi(\cdot|x),\piref(\cdot|x)\right),
\end{align*}
where $\bar{r}_{-i}$ is the leave-one-out reward baseline. We note that this gradient differs from off-policy AGRO gradient in Eqn~\eqref{eq:gradient.empirical.loss} on the regularization term as hinted earlier.

Figure~\ref{figure:tabular} compares the KL divergence between the optimized policy and optimal policy $\mathbb{KL}(\pi,\pi^*)$ over updates. We see that the regularized policy gradient algorithm generally fails to converge to $\pi^*$: the KL-regularization is not the \emph{correct} regularization in the off-policy case. On the other hand, off-policy AGRO gradient converges to $\pi^*$ as expected. Note also that the decrease in the KL divergence $\mathbb{KL}(\pi,\pi^*)$ is in fact equivalent to increase in the regularized reward ${\cal G}(\pi)$. We make this result formal in the appendix~\ref{appendix:additional-theory}.

\subsection{The on-policy AGRO algorithm}

The on-policy AGRO algorithm consists in following the gradient estimate
\begin{align*}
\nabla \widehat {\cal L}(\pi) &= \nabla_\text{PD} \widehat {\cal L}(\pi) + \nabla_\text{LR} \widehat {\cal L}(\pi),
\end{align*}
where $\nabla_\text{PD} \widehat{\cal L}(\pi)$ is the pathwise derivative estimate (the \emph{off-policy AGRO} gradient applied to $\mu=\pi$)
\begin{align}\label{eq:gradient.LR}
 & - \frac {\beta }{n}\sum_{i=1}^n \left( R_{\beta}^{\pi}(x,y_i) - \widehat R_{\beta, -i}^{\pi}(x,\pi) \right)\nabla\log\pi(y_i|x)
\end{align}
and $\nabla_\text{LR} \widehat {\cal L}(\pi)$ is the likelihood ratio gradient estimate:
\begin{align*}
\frac 1{2n}\sum_{i=1}^n \left( R_{\beta}^{\pi}(x, y_i) - \widehat R_{\beta, -i}^{\pi}(x,\pi) \right)^2 \nabla \log\pi(y_i|x).
\end{align*}
Thus, the full on-policy AGRO gradient $\nabla \widehat {\cal L}(\pi)$ can be written as
\begin{align}\label{eq:full.gradient.AGRO}
\frac 1{2n}\sum_{i=1}^n \left( R_{\beta}^{\pi}(x, y_i) - \widehat R_{\beta, -i}^{\pi}(x,\pi) -\beta \right)^2 \nabla \log\pi(y_i|x).
\end{align}

\begin{proposition} (\textbf{Unbiased on-policy AGRO gradient estimate}) \label{prop:unbiased-on-policy-grad}
$\nabla \widehat{\cal L}(\pi)$ is an unbiased estimate of $\nabla {\cal L}(\pi)$.
\end{proposition}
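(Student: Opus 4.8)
The plan is to exploit the linearity already built into the decomposition $\nabla \widehat{\cal L}(\pi) = \nabla_\text{PD}\widehat{\cal L}(\pi) + \nabla_\text{LR}\widehat{\cal L}(\pi)$ and to prove unbiasedness of each piece separately against the matching piece of $\nabla{\cal L}(\pi) = \nabla_\text{PD}{\cal L}(\pi) + \nabla_\text{LR}{\cal L}(\pi)$. Throughout I would condition on a fixed $x$ and treat $y_1,\dots,y_n$ as i.i.d.\ draws from $\pi(\cdot|x)$, writing $R_i \eqdef R^\pi_\beta(x,y_i)$, $g_i \eqdef \nabla\log\pi(y_i|x)$, $\bar R \eqdef R^\pi_\beta(x,\pi) = \E_{y\sim\pi}[R^\pi_\beta(x,y)]$, and $\widehat R_{-i} \eqdef \widehat R^\pi_{\beta,-i}(x,\pi) = \frac{1}{n-1}\sum_{j\neq i} R_j$. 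Two elementary facts do all the work: the score is zero-mean, $\E_{y\sim\pi}[\nabla\log\pi(y|x)] = 0$; and the leave-one-out baseline $\widehat R_{-i}$ depends only on $\{y_j\}_{j\neq i}$, hence is independent of $y_i$, with $\E[\widehat R_{-i}] = \bar R$.

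For the pathwise-derivative part I would expand $\E[\nabla_\text{PD}\widehat{\cal L}(\pi)]$, separate the main term from the baseline term, and use that the $n$ summands are identically distributed. The main term reduces to $-\beta\,\E_{y\sim\pi}[R^\pi_\beta(x,y)\nabla\log\pi(y|x)]$, which equals $\nabla_\text{PD}{\cal L}(\pi) = \nabla{\cal L}_\mu(\pi)|_{\mu=\pi}$ of Lemma~\ref{lemma:off-policy-grad} (on-policy, the true baseline $\bar R$ is likewise annihilated by the zero-mean score). The empirical baseline term vanishes: since $\widehat R_{-i}$ is independent of $y_i$, $\E[\widehat R_{-i}\,g_i] = \E[\widehat R_{-i}]\,\E[g_i] = 0$. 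Thus $\nabla_\text{PD}\widehat{\cal L}(\pi)$ is unbiased, which incidentally re-derives the fact that a leave-one-out baseline introduces no bias.

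The likelihood-ratio part is the delicate step and the main obstacle. Here I would take $\E[\tfrac{1}{2n}\sum_i (R_i - \widehat R_{-i})^2 g_i]$ and expand the square as $R_i^2 - 2 R_i\widehat R_{-i} + \widehat R_{-i}^2$. By independence of $\widehat R_{-i}$ from $y_i$, the three resulting contributions are: $\E[R^2 g]$; the cross term $-2\,\E[R_i\widehat R_{-i}g_i] = -2\,\E[R g]\,\E[\widehat R_{-i}] = -2\bar R\,\E[R g]$; and $\E[\widehat R_{-i}^2 g_i] = \E[\widehat R_{-i}^2]\,\E[g_i] = 0$, again by the zero-mean score. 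Summing gives $\E[R^2 g] - 2\bar R\,\E[R g]$, which is precisely $\E[(R-\bar R)^2 g]$ since the leftover $\bar R^2\,\E[g]$ term is zero. Hence $\E[\nabla_\text{LR}\widehat{\cal L}(\pi)] = \tfrac12\,\E_{y\sim\pi}[(R^\pi_\beta(x,y)-\bar R)^2\nabla\log\pi(y|x)] = \nabla_\text{LR}{\cal L}(\pi)$.

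The reason the LR step is subtle, and worth flagging explicitly, is that replacing the true conditional mean $\bar R$ by the random estimate $\widehat R_{-i}$ inside a squared term is not innocent: $\E[\widehat R_{-i}^2]\neq\bar R^2$, so one might fear a residual bias of order $\mathrm{Var}(\widehat R_{-i})$. What saves the argument is that this stray $\widehat R_{-i}^2$ factor multiplies the score $g_i$ and, because the leave-one-out construction makes it independent of $y_i$, it is killed by $\E[g_i]=0$, while the surviving cross term reproduces exactly the $-2\bar R\,\E[Rg]$ demanded by the true square. So the leave-one-out independence does double duty, rendering both the PD baseline subtraction and the LR square-expansion bias-free at once. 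Finally, the completing-the-square form in~\eqref{eq:full.gradient.AGRO} differs from $\nabla_\text{PD}\widehat{\cal L}(\pi)+\nabla_\text{LR}\widehat{\cal L}(\pi)$ only by the additive term $-\tfrac{\beta^2}{2n}\sum_i g_i$, which is itself zero-mean, so it is also unbiased; taking the outer expectation over $x\sim\rho$ then completes the proof.
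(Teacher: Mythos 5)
Your proof is correct and follows essentially the same route as the paper's: decompose $\nabla\widehat{\cal L}(\pi)$ into the PD and LR parts and use the independence of the leave-one-out baseline $\widehat R_{\beta,-i}^{\pi}(x,\pi)$ from $y_i$ together with the zero-mean score $\E[\nabla\log\pi(y_i|x)]=0$ to match each part to its population counterpart, with your explicit expansion of the square in the LR term simply filling in the step the paper leaves implicit. One minor aside: the completing-the-square form in Eq.~\eqref{eq:full.gradient.AGRO} equals $\nabla_\text{PD}\widehat{\cal L}(\pi)+\nabla_\text{LR}\widehat{\cal L}(\pi)$ \emph{plus} $\frac{\beta^2}{2n}\sum_i\nabla\log\pi(y_i|x)$ (your residual term has the wrong sign), but since that term is zero-mean the conclusion is unaffected.
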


We postpone the full proof of theoretical results to Appendix~\ref{appendix:additional-theory}. We now illustrate a few properties of the gradient estimate.

\paragraph{Variance reduction.} Notice that we can build the estimate 
\begin{align*}
\nabla_\text{LR}^{\text{biased}} \widehat {\cal L}(\pi)\eqdef \frac 1{2n}\sum_{i=1}^n \left( A_i - \overline{A} \right) \nabla \log\pi(y_i|x),
\end{align*}
where $A_i\eqdef \left( R_{\beta}^{\pi}(x,y_i) - \widehat R_{\beta, -i}^{\pi}(x,\pi) \right)^2$, and $\overline{A}\eqdef \frac 1{n-1}\sum_{j\neq i} A_j$ is a leave-one-out baseline. This estimates may have a lower variance than $\nabla_\text{LR} \widehat {\cal L}(\pi)$ but at the price of introducing some bias. The bias stems from the fact that the baseline $\frac 1{n-1}\sum_{j\neq i} A_j$ now depends on the sample $y_i$, as opposed to typical regular leave-one-out applications \citep{kool2019buy,ahmadian2024,shao2024deepseekmath}.

\paragraph{Relation to on-policy RL.}
The usual approach in on-policy RL consists in maximizing the original objective $\pi\mapsto {\cal G}(\pi)$ by following the regularized policy gradient:
$$ \nabla \widehat {\cal G}(\pi) \eqdef \frac 1n\sum_{i=1}^n \left( r_i - \beta \log \frac{\pi(y_i|x)}{\piref(y_i|x)} \right)\nabla\log\pi(y_i|x),
$$
Notice that, similarly to Eq.~\eqref{eq:gradient.LR}, the leave-one-out baseline $\widehat R_{\beta, -i}^{\pi}(x,\pi)$ can be subtracted from the regularized reward in the expression above without introducing any bias (this is called the RLOO algorithm in \cite{ahmadian2024}).

We see that though the losses $\pi\mapsto {\cal G}(\pi)$ and $\pi\mapsto {\cal L}(\pi)$ have the same global optimum $\pi^*$, the on-policy AGRO gradient $\nabla \widehat {\cal L}(\pi)$ is not aligned with the usual on-policy RL gradient $\nabla \widehat {\cal G}(\pi)$. Actually, $\nabla \widehat {\cal L}(\pi)$ is the sum of two terms, $\nabla_\text{PD} \widehat {\cal L}(\pi)$, which is aligned with $\nabla \widehat {\cal G}(\pi)$, and $\nabla_\text{LR} \widehat {\cal L}(\pi)$, which intends to reduce the variance $\pi\mapsto {\mathbb V}_{y\sim \pi(\cdot|x)}\left( R_\beta^{\pi'}(x,y)\right)|_{\pi'=\pi}$. We will empirically investigate the impact of this extra gradient term in Section~\ref{sec:exp}.

\section{Implementation of AGRO at the token level}
\label{sec:token.level.gradient.estimate}

In the previous sections, we have expressed the gradient of our losses using sequence-level generations $y\sim \pi(\cdot|x)$. Henceforth for simplicity, we omit the dependency on $x$ for simplicity when the context is clear.
For LLM applications, the response $y$ is  produced as a sequence of tokens denoted as $(y_t)_{t=1}^T$ where $T$ is the sequence length. These tokens are generated auto-regressively $y_t\sim \pi(\cdot|y_{1:t-1})$, such that with the chain rule $\pi(y)=\pi(y_1) \pi(y_2|y_1)\dots \pi(y_T|y_{1:T-1})$. Using simplified notations, we write $\pi(y)=\pi(y_1) \pi(y_2)\dots \pi(y_T)$ by dropping the conditional dependency $\pi_t \coloneqq \pi(\cdot|y_{1:t-1})$.

Now, we show that we can use this specific token-level form to derive lower-variance gradient estimates. In particular, we focus on the gradient term related to the regularization
$$g(\pi) \eqdef \E_{y\sim \pi} \left[ \log\frac{\pi(y)}{\piref(y)}  \nabla\log\pi(y) \right].$$
We can build three alternative stochastic estimates of $g(\pi)$, given a sampled sequence $y\sim \pi$:
\begin{eqnarray*}
\hat g_1 \!\!\!\!&=& \!\!\!\! \sum_{t= 1}^T\sum_{t' = 1}^T \nabla\log\pi(y_t) \log\frac{\pi(y_{t'})}{\piref(y_{t'})} \\
\hat g_2 \!\!\!\!&=& \!\!\!\!\sum_{t= 1}^T\sum_{t' = t}^T \nabla\log\pi(y_t) \log\frac{\pi(y_{t'})}{\piref(y_{t'})} \\
\hat g_3 \!\!\!\!&=& \!\!\!\! \sum_{t=1}^T \nabla\log\pi(y_t)\Big( \log\frac{\pi(y_t)}{\piref(y_t)}  + \!\! \sum_{t'=t+1}^T \!\! \KL\left(\pi_{t'}, \pi_{\text{ref}, t'}\right) \! \Big) \! .
\end{eqnarray*}
These three estimates are  all unbiased. In fact, $\hat g_1$ is just the vanilla estimate of $g(\pi)$ by expanding $\pi(y)$ in its sequence form. In practice, such an estimate can be implemented by differentiating the squared loss function $
    \left(\log \frac{\pi(y)}{\piref(y)}\right)^2$, which can be readily computed along the sequence $y$.
    
Next, we discuss how $\hat g_2$ and $\hat g_3$ produces variance reduction with certain adjustments to the above implementation.

\paragraph{Variance reduction.}
We have constructed $\hat g_2$ to ignore the \emph{lower-triangular} terms of $\hat g_1$, which in expectation are zero. Indeed, for $t'<t$, the term $\E_{y\sim \pi} \left[ \nabla\log\pi(y_t) \log\frac{\pi(y_{t'})}{\piref(y_{t'})} \right]$ evaluates to zero 
$$\E_{y_{1:t-1} \sim \pi} \left[ \log\frac{\pi(y_{t'})}{\piref(y_{t'})}  \underbrace{\E_{y_{t}\sim \pi}\left[ \nabla\log\pi(y_t) \; | \; y_{1:t-1} \right]}_{=0} \right] = 0.$$
By removing these zero-mean terms from, $\hat g_2$ usually reduces the variance compared to $\hat g_1$. Finally, $\hat g_3$ further reduce the variance of $\hat g_2$ by explicitly computing the conditional expectation over each single next token (note this information is available since the distribution of next tokens in the forward pass). Indeed, for $t'>t$, the cross term $\E_{y\sim \pi} \left[ \nabla\log\pi(y_t) \log\frac{\pi(y_{t'})}{\piref(y_{t'})}  \right]$ evaluates as follows
\begin{align*}
& \E_{y_{1:t'-1} \sim \pi} \left[ \nabla\log\pi(y_t) \E_{y_{t'} \sim \pi} \left[ 
 \log\frac{\pi(y_{t'})}{\piref(y_{t'})} \Big| y_{1:t'-1}\right]\right] \\
&= \E_{y_{1:t'-1} \sim \pi} \left[ \nabla\log\pi(y_t)\KL(\pi_{t'}, \pi_{\text{ref}, t'})\right],
\end{align*}
where $\KL \left(\pi_{t'}, \pi_{\text{ref}, t'}\right)$ denotes the token-level KL divergence between $\pi(\cdot|y_{t'-1})$ and $\piref(\cdot|y_{t'-1})$ that can be computed with a summation over the space of tokens. As such, we can define several variants of the AGRO algorithm by implementing the gradient of the losses, using either the naive gradient estimate $\hat g_1$. We can also resort to variance reduction techniques that remove the lower-triangular terms ($\hat g_2$) or compute local expectations ($\hat g_3$).

Throughout the experiments, we have applied the vanilla estimate for AGRO as we see limited improvements due to variance reduction on small scale training. Nevertheless, the impact to expensive large-scale training can still be immense, which we leave to practitioners for further ablations.

\paragraph{Extension to the off-policy case.}
Notice that in the off-policy case, a similar variance reduction technique can be applied to the off-policy case $\nabla{\cal L}_{\mu}(\pi)$ by estimating the term $\E_{y\sim \mu} \left[ \log\frac{\pi(y)}{\piref(y)}  \nabla\log\pi(y) \right]$ with the variance-reduced estimate (akin to how $\hat g_2$ improves over $\hat g_1$)
$$\sum_{t=1}^T \nabla\log\pi(y_t) \sum_{t'=t}^T \log\frac{\pi(y_t)}{\piref(y_t)}.$$

For interested readers, we note that naive implementation of $\nabla_\text{LR}{\cal L}$ can also benefit from variance reduction, despite in a rather sophisticated manner. We provide more extended discussions in Appendix~\ref{apx:further.variance.reduction}.

Much of the discussion here extends to fine-tuning algorithms in prior work \citep{rafailov2024direct,azar2024general,calandriello2024human,richemond2024offline}, where sequence level algorithm requires careful adaptation to the token level implementation.

\section{Discussions on related work and extensions}

We further discuss how the AGRO formulation can be extended and how it relates to prior work.

\paragraph{Pairwise preference.} A dominant paradigm in RLHF is where the reward $r(x,y)$ is derived from a pairwise preference dataset, usually modeled using the Bradley-Terry (BT) assumption \citep{bradley1952rank}: $p(y\succ y'|x)=\exp\left(r(x,y)-r(x,y')\right)$. A direct implication of generation consistency from Theorem~\ref{thm:consistency} is that for any $y,y'\in Y$,
$$r(x,y)-r(x,y') = \beta\log\frac{\pi(y|x)}{\piref(y|x)} -\beta\log\frac{\pi(y'|x)}{\piref(y'|x)},$$ which when combined the BT loss produces the direct preference optimization loss \citep{rafailov2024direct}. Similar arguments can be extended to other generic loss functions for pairwise preference data \citep{zhao2023slic,azar2024general,tang2024generalized}.

\paragraph{Connections to regularized value learning.} The generation consistency (Theorem~\ref{thm:consistency}) can be understood as a special case of the general consistency for regularized value based learning in the RL  \citep{schulman2017,nachum2017bridging} and Generative Flow Networks literature \citep{bengio2023gflownet,mohammadpour2024maximum}. Indeed, if we interpret the log ratio $\log\frac{\pi(y|x)}{\piref(y|x)}$ as a Q-function, then the policy optimization algorithm is understood as a value learning algorithm. At the token level, the consistency condition can be written for any steps $t,t'$:
\begin{align*}
   \widetilde{V}^{\pi^*}(y_{1:t}) = \sum_{s=t}^{t'-1} \left( r_s - \beta \log \frac{\pi^*(y_s)}{\piref(y_s)}\right) + \widetilde{V}^{\pi^*}(y_{1:t'}),
\end{align*}
where $\widetilde{V}^{\pi^*}(y_{1:t})$ is the optimal regularized value function that depends on the whole partial sequence and $\pi^*$ the optimal regularized policy. Here, we have omitted the dependency on the prompt $x$ and $r_s$ is the per-token reward. Such a formulation might be useful in cases where dense rewards are used \citep{uesato2022solving,lightman2023let}.

Previously, we have established that the gradient of the off-policy AGRO with on-policy sampling was aligned with the gradient as the RLHF problem. Indeed, viewing AGRO as a value-based learning algorithm, this connection can be viewed as a special case of the equivalence between soft Q-learning and regularized policy optimization \citep{schulman2017,nachum2017bridging,o2022connection}.

\section{Experiments}\label{sec:exp}

\label{sec:exp}

Throughout, we focus on the mathematical reasoning dataset MATH \citep{hendrycks2021measuring} where the prompt $x$ consists of asking the model a mathematical question with a short-form ground truth answer $a^*$ available. Given the model generation $y=(c,a)$ which typically consists of a step-by-step chain-of-thought reasoning $c$ and a proposed answer $a$, the reward is computed as a match between $a$ and $a^\ast$. We adopt Sympy \citep{10.7717/peerj-cs.103} to automatically match the answers and assign a reward of $r=1$ if there is a match and $r=0$ otherwise. 

We fientune models on the MATH training set, which defines the prompt distribution $\rho$, with various objective alternatives introduced above. Throughout, we provide the model with a system prompt that asks for a step-by-step solution, followed by a final answer. Our experiments are based on the 8B model from the Llama 3 model family \citep{dubey2024llama}. All algorithmic variants apply identical hyper-parameters such as learning rate, and that they all apply $n=4$ samples for gradient estimations, which we detail in Appendix~\ref{appendix:hypers}.

\begin{figure}[t]
\centering
\includegraphics[width=3in]{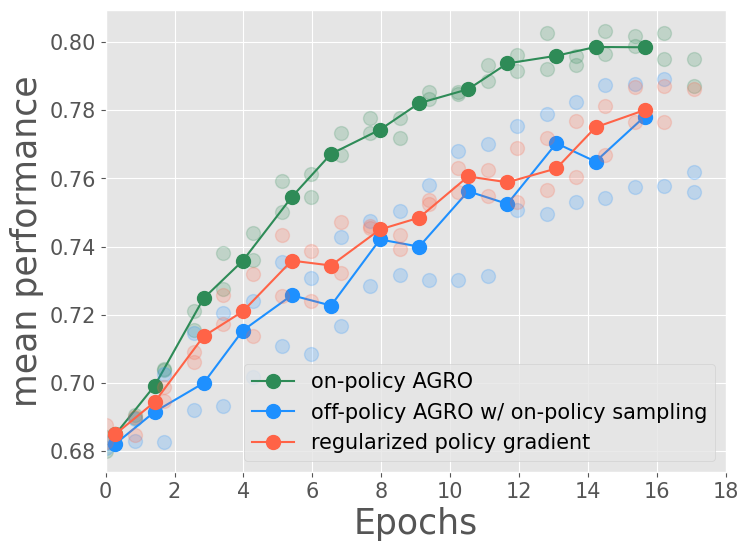}
\caption{\small{Training performance for on-policy learning. We compare three algorithmic alternatives: regularized policy gradient algorithm (red), off-policy AGRO algorithm with on-policy sampling (blue) and on-policy AGRO algorithm (green), all with regularization $\beta=0.001$. The performance is evaluated against the learning iterations, with each iteration being $100$ gradient updates. We observe a similar performance from the regularized policy gradient algorithm and off-policy AGRO, which is expected; on-policy AGRO seems to derive better performance over other baselines, given the same number of learning steps.}}
\label{figure:onpolicy-steps}
\end{figure}

\subsection{On-policy learning}

We start with the on-policy learning where we compare three algorithmic variants: regularized policy gradient algorithm (with leave-one-out baseline, i.e., RLOO \citep{ahmadian2024}), off-policy AGRO algorithm with on-policy sampling, and on-policy AGRO algorithm. For the base experiment, we apply a regularization coefficient of $\beta=0.001$.

Figure~\ref{figure:onpolicy-steps} shows the training performance as the training progresses. We ran each algorithmic variant with 2 seeds and show the average. A few observations are in order: (1) Note that in the case of perfect on-policy sampling, the off-policy AGRO algorithm (for $\mu=\pi$) should behave identically as the regularized policy gradient algorithm. We see that this is indeed the case within statistical errors across independent runs; (2) The on-policy AGRO algorithm seems more data-efficient, producing faster progress on the reward given same number of epochs on the training set. All algorithms can be implemented with similar computational cost per iteration, and as a result, the number of epochs is an approximation to the total computational cost. This means that the on-policy AGRO algorithm is also more data-efficient compared to alternatives - it requires less compute to achieve the same level of training performance.

Figure~\ref{figure:online-evals} shows the evaluation performance during the course of training. The results are fairly correlated with the training rewards - on-policy ARGO achieves the best performance overall, with a $+7\%$ lift in performance compared to $\piref$, while other baselines has a lift of $+4\%$.

\begin{figure}[t]
\centering
\includegraphics[width=3in]{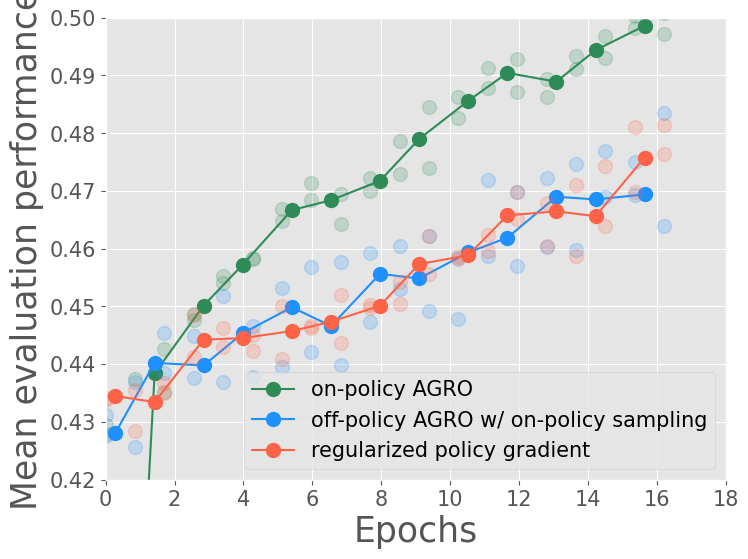}
\caption{\small{Evaluation on the test set for the training experiments conducted in Figure~\ref{figure:onpolicy-steps}. We plot the evaluation performance as a function of the learning iterations. We see that the training and evaluation performance are quite correlated, and that on-policy AGRO seems to achieve the best performance across different algorithmic variants.}}
\label{figure:online-evals}
\end{figure}

\subsection{Off-policy and offline learning}

We emulate the extreme case of off-policy learning where the sampling distribution is from the reference policy $\piref$ and fixed throughout training. We consider a few algorithmic baselines: AGRO and regularized policy gradient. To adapt the regularized policy gradient to the offline case, we consider the per-token KL regularization: given a trajectory $y$, the regularization is computed as $
    \sum_{t=1}^T \mathbb{KL}(\pi_t, \pi_\text{ref,t})$ 
which generally differs from the sequence-level KL regularization in the off-policy case \citep{calandriello2024human,tang2024generalized}. 

Figure~\ref{figure:offline} shows the evaluation performance of off-policy AGRO when compared against the baseline of a vanilla regularized policy gradient. AGRO and regularized policy gradient both apply a regularization of $\beta=0.01$. We make a few observations: (1) When regularization is weak, as with the case of unregularized policy gradient, the performance crashes as we go through more epochs on the training set. Such drop in performance does not happen for the on-policy case, which illustrates the potential instability of off-policy learning; (2) Off-policy AGRO seems to obtain better performance overall compared to KL-regularized policy gradient algorithm, obtaining both better peak performance and better performance at a fixed number of epochs. 

\begin{figure}[t]
\centering
\includegraphics[width=3in]{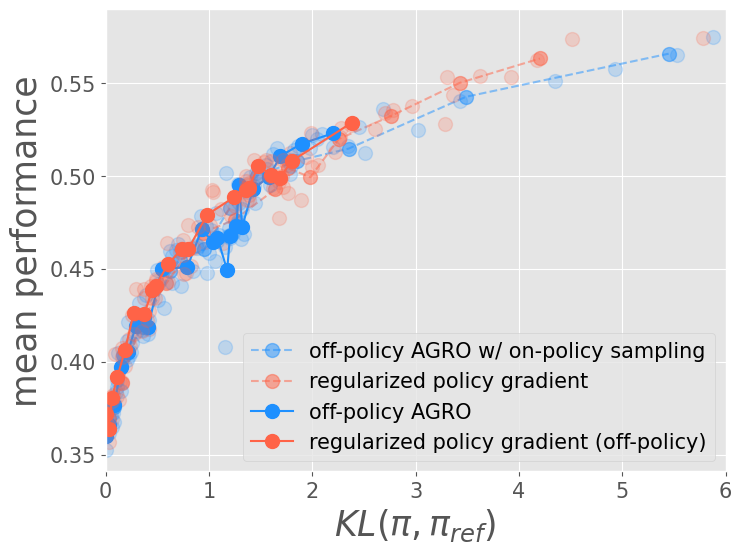}
\caption{Comparison of on--policy vs off-policy algorithms. Within the same number of updates, on-policy algorithms generally deviate more from the reference policy $\piref$, leading to larger increase in training performance. However, the KL-performance curve traced out by the off-policy algorithms aligns with the on-policy algorithms, indicating that we do not suffer any KL inefficiency despite being off-policy.}
\label{figure:offpolicy}
\end{figure}

\subsection{Ablation and comparison}

We have found that the difference between off-policy AGRO and KL-regularized policy gradient seems to enlarge as the level of off-policyness increases. Recall that the two algorithms are effectively equivalent when the sampling is fully on-policy, and their difference in regularization increases in the off-policy case. Below, we show further results on comparing the on-policy and off-policy performance: we see that though off-policy learning provides overall less improvement on performance, it can be equally KL-efficient as on-policy.

\paragraph{KL-efficiency comparison of on-policy and off-policy.}

We consider a synthetic off-policy learning setting where we keep a buffer of samples seen during on-policy sampling, and replay the the data at a probability $p=0.5$ to randomly replace the on-policy samplesNote that conceptually at the extreme when $p=0$ where no on-policy sample is replayed, the off-policyness is made extreme and we recover the offline setting.

As before, Figure~\ref{figure:offpolicy} shows the training performance as the training progresses. We plot against the KL divergence and compare the off-policy performance against the on-policy method. We make a few observations: (1) The off-policy learning is as KL-efficient as on-policy learning, though within the same amount of learning steps, on-policy learning drives up the KL divergence more, hence making faster progress to the training performance; 
(2) The on-policy algorithm converges faster when measured against the number of learning steps, likely due to the fact that the on-policy update tends to drive the learner policy further away from the reference policy. This corroborates some of the observations in prior work \citep{gao2023scaling,rame2024warp} where similar trade-offs have been investigated. (3) In this synthetic setting, there is not significant difference between the off-policy AGRO algorithm and regularized policy gradient, despite subtle difference in the regularization loss.

\begin{figure}[t]
\centering
\includegraphics[width=3in]{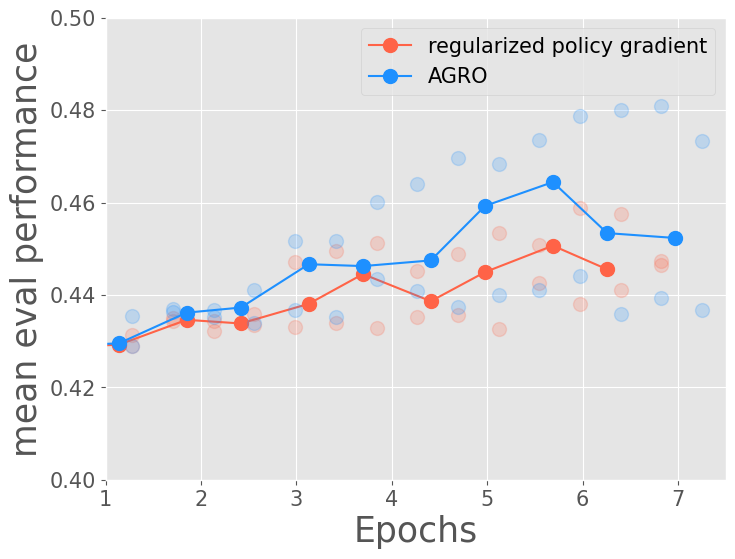}
\caption{\small{Evaluation performance for off-policy (offline) learning. We compare two algorithms: KL-regularized policy gradient (red) and off-policy AGRO (blue). We see that as training progresses, AGRO seems to obtain better overall performance than KL-regularized policy gradient.}}
\label{figure:offline}
\end{figure}

\paragraph{Ablation on regularization coefficient $\beta$.} In the on-policy experiments, we have varied the coefficient for both the regularized policy gradient algorithm $\beta\in\{0,0.0001,0.001\}$ and the off-policy AGRO algorithm $\beta\in\{0.0001,0.001,0.01,0.1\}$. The ablation results are presented in Figure~\ref{figure:ablation}.
In an expected way, strong regularization tends to slow down the policy optimization progress, i.e., with the same number of learning steps, the performance improvement is much slower at a large value of $\beta$. 

However, strong regularization makes the algorithm more KL-efficient (as similarly noted in prior work \citep{gao2023scaling,rame2024warp}). Intuitively, this is because a strong regularization makes the update much more conservative and this prevents the policy from deviating too much from the reference policy, while making progress on the training reward. For completeness, we also ablate the case where the optimization is unregularized ($\beta=0$), 
there is no incentive in preventing the policy from staying close to the reference policy. We find that no regularization makes the algorithm generally much more KL-inefficient and causes severe instability during off-policy learning. 

We finally comment on the choice of a good default value for $\beta$ that achieves a good balance between regularization and the default reward. For the squared regularization, note that its gradient scales as $\mathcal{O}(T)$. As a result, in order to have an effective regularization at a constant scale, we require $\beta=\mathcal{O}(1/T)$. We also note that if the updates are per token rather than per sequence \citep{grinsztajn2024averaging}, the scale of $\beta$ will be roughly $\mathcal{O}(1)$ and the hyper-parameter tuning process might be simpler in general.

\begin{figure}[t]
\centering
\includegraphics[width=3in]{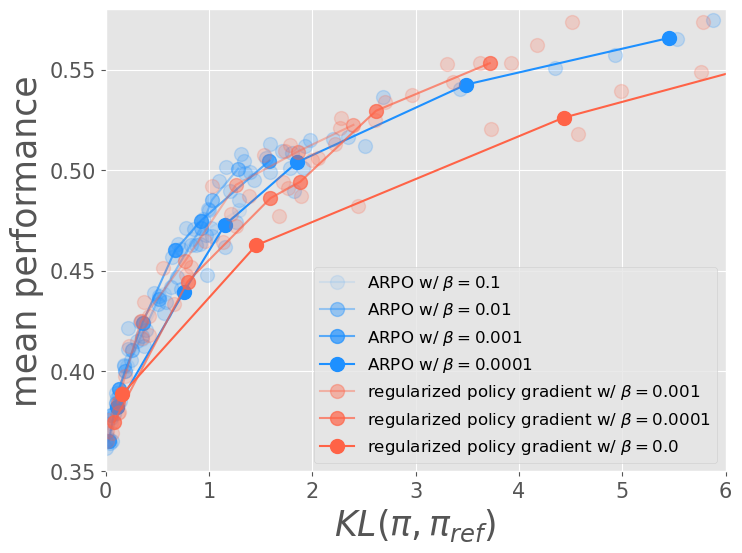}
\caption{\small{Ablation on the regularization coefficient $\beta$. We see that as $\beta$ increases, all algorithms generally become more KL-efficient as they are more conservative with the updates. However, they also tend to deviate less from the origin reference policy $\piref$ with a fixed number of updates.}}
\label{figure:ablation}
\end{figure}

\section{Discussion and limitation}

We have proposed AGRO, a fine-tuning algorithm leveraging both on-policy and off-policy data for regularized policy optimization. AGRO is based on generation consistency, an optimality condition for regularized policy optimization. This optimality condition allows for alternative loss functions that can naturally make use of any data sources. As a result, AGRO enjoys theoretical guarantees and produces compelling performance improvements over baseline regularized policy gradient algorithm in a few on-policy and off-policy learning settings.

We also note that the off-policy learning enabled by AGRO is not omnipotent and that learning can still be unstable due to off-policy data generation when taken to the extreme. As a result, potential future study includes a more careful investigation into the source of off-policy instability and how one might leverage complementary techniques such as importance sampling.

\bibliography{your_bib_file}
\bibliographystyle{plainnat}

\clearpage
\onecolumn

\begin{appendix}

\section*{\centering APPENDICES: RL-finetuning LLMs from on- and off-policy data with a single algorithm}

\section{Experimental details}\label{appendix:hypers}

We experimented with the Llama 3 model of size 8B. All experiments are conducted with identical hyper-parameter settings: we always apply a batch size of $B=64$ prompts per update, and sample $n=4$ distinct generations per prompt. All training and evaluation sampling are conducted at a temperature of $\tau=1$ and with $\text{top-p}=1$.

We train on the MATH training set with $7500$ examples and evaluate on the test set with $5000$ examples. A supervised fine-tuning on the training set is conducted to warm up the RL training, hence the gap between training and test set accuracy.

For both training and evaluation, we provide system instructions that ask the model to generate a response with step-by-step solution, followed by a final conclusion phrased as \emph{the final answer is} followed by the answer predicted by the model. This is consistent with the prompt structure discussed for Llama models \citep{dubey2024llama}. The reward is calculated by parsing the string after marker \emph{the final answer is} and matching against a ground truth answer that comes with the dataset.

All experiments are conducted with an KL regularization coefficient $\beta>0$ which we have ablated in the main paper.

\section{Additional theoretical results}
\label{appendix:additional-theory}
We provide extended discussion on additional theoretical results.

\subsection{Proof of theoretical results} 
We detail the proof of a number of theoretical results in the main paper.
\label{sec:appx.proofs}
\paragraph{Proof of Lemma~\ref{lemma:off-policy-grad}.}
\begin{proof}
The proof follows from taking the derivative of a squared loss using the fact that $\nabla R^\pi_\beta(x,y) = -\beta \nabla\log\pi(y|x)$, and that $\E_{y\sim\mu(\cdot|x)} \left[ \left( 
R^\pi_\beta(x,y) - R^\pi_\beta(x,\mu)
\right)
\nabla R^\pi_\beta(x,\mu)
\right]  = 0.
$ This concludes the proof.
\end{proof}

\paragraph{Proof of Proposition~\ref{prop:unbiased-on-policy-grad}.}

\begin{proof}
From the off-policy AGRO gradient we have $\E_{x\sim\rho, y_1,\dots,y_n\sim \pi(\cdot|x)}\left[ \nabla_\text{PD} \widehat {\cal L}(\pi)\right]
 = \nabla_\text{PD} {\cal L}(\pi)$. 
Now again, using that $y_1,\dots, y_n$ are i.i.d.~and that $\widehat R_{\beta, -i}^{\pi}(x,\pi)$ is independent of $y_i$ (and in expectation equals $R_{\beta}^{\pi}(x,\pi)$), we have that $\E_{x\sim\rho, y_1,\dots,y_n\sim \pi(\cdot|x)}\left[ \nabla_\text{LR} \widehat {\cal L}(\pi)\right]= \nabla_\text{LR} {\cal L}(\pi)$. Thus $\E\left[ \nabla \widehat{\cal L}(\pi) \right] = \nabla {\cal L}(\pi)$ and the proof is concluded.
\end{proof}

\subsection{Equivalence between KL divergence and performance under regularized reward}
We have hinted at the connection between the KL divergence $\mathbb{KL}(\pi,\pi^*)$ and the performance under regularized reward. We make the statement formal below, which is a special case of \citep{o2022connection}. Whenever the context is clear we drop the dependency on $x$.
\begin{lemma} (\textbf{Equivalence})
    The KL divergence $\mathbb{KL}(\pi,\pi^*)\eqdef \mathbb{E}_{x\sim\rho}\left[ \mathbb{KL}(\pi(\cdot|x),\pi^*(\cdot|x))\right]$ is related to the performance ${\cal G}(\pi)\eqdef \mathbb{E}_{x\sim\rho, y\sim \pi}\left[R_\beta^\pi(x,y)\right]$ under the regularized policy as 
    \begin{align*}
        \beta \mathbb{KL}(\pi,\pi^*) = {\cal G}(\pi^*) - {\cal G}(\pi).
    \end{align*}
\end{lemma}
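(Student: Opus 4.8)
The plan is to prove the identity $\beta \KL(\pi,\pi^*) = {\cal G}(\pi^*) - {\cal G}(\pi)$ by direct computation, exploiting the closed form of $\pi^*$ from Eq.~\eqref{eq:pi.star}. Since both sides are expectations over $x\sim\rho$, I would fix a prompt $x$ and work with the per-prompt quantities, suppressing the dependence on $x$ as the statement allows. The key observation is that ${\cal G}(\pi) = \E_{y\sim\pi}[R_\beta^\pi(x,y)]$ by definition of the regularized reward, and that $\pi^*$ satisfies the generation consistency of Theorem~\ref{thm:consistency}, namely $R_\beta^{\pi^*}(x,y) = \widetilde V^{\pi^*}(x)$ for every $y$ in the support. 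Hence ${\cal G}(\pi^*) = \widetilde V^{\pi^*}(x)$ after taking the expectation over $x$.

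First I would take the logarithm of Eq.~\eqref{eq:pi.star} to obtain
\begin{align*}
\beta \log\frac{\pi^*(y|x)}{\piref(y|x)} = r(x,y) - \widetilde V^{\pi^*}(x),
\end{align*}
which rearranges to $r(x,y) - \beta\log\frac{\pi^*(y|x)}{\piref(y|x)} = \widetilde V^{\pi^*}(x)$, recovering the consistency relation. Next I would expand the target KL divergence using the chain $\log\frac{\pi}{\pi^*} = \log\frac{\pi}{\piref} - \log\frac{\pi^*}{\piref}$, so that
\begin{align*}
\beta\,\KL(\pi(\cdot|x),\pi^*(\cdot|x)) = \E_{y\sim\pi}\!\left[\beta\log\frac{\pi(y|x)}{\piref(y|x)} - \beta\log\frac{\pi^*(y|x)}{\piref(y|x)}\right].
\end{align*}
I would then substitute the expression for $\beta\log\frac{\pi^*}{\piref}$ derived above, turning the second term into $r(x,y) - \widetilde V^{\pi^*}(x)$.

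Collecting terms, the right-hand side becomes $\E_{y\sim\pi}\big[\beta\log\frac{\pi(y|x)}{\piref(y|x)} - r(x,y)\big] + \widetilde V^{\pi^*}(x)$, where I used that $\widetilde V^{\pi^*}(x)$ is constant in $y$ and so passes through the expectation. The first expectation is exactly $-{\cal G}(\pi)$ evaluated at $x$ (since $R_\beta^\pi = r - \beta\log\frac{\pi}{\piref}$), and $\widetilde V^{\pi^*}(x)$ equals ${\cal G}(\pi^*)$ at $x$ by the consistency argument. Taking the expectation over $x\sim\rho$ then yields $\beta\,\KL(\pi,\pi^*) = {\cal G}(\pi^*) - {\cal G}(\pi)$, completing the proof.

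There is no genuine obstacle here; the argument is a routine substitution. The only point requiring mild care is bookkeeping the constant-in-$y$ term $\widetilde V^{\pi^*}(x)$ and confirming that ${\cal G}(\pi^*) = \E_{x\sim\rho}[\widetilde V^{\pi^*}(x)]$, which follows because $R_\beta^{\pi^*}(x,y)$ is independent of $y$ on the support of $\pi^*$ and so its expectation under $y\sim\pi^*$ is just $\widetilde V^{\pi^*}(x)$ itself. One should also verify the support condition so that the KL divergence is finite and the substitution is valid wherever $\pi$ places mass, which holds under the assumption $\pi\in\Pi$ sharing support with $\piref$.
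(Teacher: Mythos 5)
Your proposal is correct and follows essentially the same route as the paper: substitute the closed form of $\pi^*$ into $\beta\,\KL(\pi,\pi^*)$ to obtain $-{\cal G}(\pi) + \E_{x\sim\rho}\bigl[\widetilde V^{\pi^*}(x)\bigr]$, then identify $\E_{x\sim\rho}\bigl[\widetilde V^{\pi^*}(x)\bigr]$ with ${\cal G}(\pi^*)$. The only cosmetic difference is in that last identification, where you invoke generation consistency ($R_\beta^{\pi^*}(x,y)=\widetilde V^{\pi^*}(x)$ for all $y$) while the paper uses the decomposition $\widetilde V^{\pi^*}(x)=V^{\pi^*}(x)-\beta\,\KL(\pi^*(\cdot|x),\piref(\cdot|x))$; these are two phrasings of the same fact.
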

\begin{proof}
    By the property of the optimal policy $\pi^*(y|x)=\piref(y|x)\exp(\beta^{-1}r(x,y))/\exp(\beta^{-1} \widetilde V^{\pi^*}(x))$, we have
    \begin{align*}
       \beta \mathbb{KL}(\pi,\pi^*) = \mathbb{E}_{x\sim\rho, y\sim \pi(\cdot|x) }\left[-r(x,y)  +\beta\log\frac{\pi(y|x)}{\piref(y|x)} + \widetilde V^{\pi^*}(x) \right] = -{\cal G}(\pi) + \mathbb{E}_{x\sim\rho}\left[\widetilde V^{\pi^*}(x)\right].
    \end{align*}
We conclude the proof by noticing that $\mathbb{E}_{x\sim\rho}\left[\widetilde V^{\pi^*}(x)\right]  =\mathbb{E}_{x\sim\rho}\left[V^{\pi^*}(x) - \beta \mathbb{KL}(\pi(\cdot|x),\pi^*(\cdot|x))\right] = {\cal G}(\pi^*)$.
\end{proof}

\subsection{Variance reduction for estimating $\nabla_\text{LR}{\cal L}(\pi)$}\label{apx:further.variance.reduction}

In Section~\ref{sec:token.level.gradient.estimate} we introduced token-level gradient estimates to reduce the variance of estimating the gradients $\nabla_\text{PD}{\cal L}$ and $\nabla {\cal G}$. Here we focus on reducing the variance of $\nabla_\text{LR}{\cal L}$. 

The cross product terms of the quadratic form that defines $\nabla_\text{LR}{\cal L}$ can be expressed as a function of $\E_{y\sim \pi} \left[ \log\frac{\pi(y)}{\piref(y)}  \nabla\log\pi(y) \right]$ thus can be handled using the estimates discussed in Section~\ref{sec:token.level.gradient.estimate}. Now we focus on reducing the variance of estimating the squared term 
$$g(\pi) \eqdef \E_{y\sim \pi} \left[ \left( \log\frac{\pi(y)}{\piref(y)} \right)^2 \nabla\log\pi(y) \right].$$ 

Expanding the above into the sequence form, we have
\begin{eqnarray*}
g(\pi) &=& \E_{y\sim \pi} \left[ \sum_{t=1}^T \sum_{t'=1}^T \sum_{s=1}^T \log\frac{\pi(y_t)}{\piref(y_t)} \log\frac{\pi(y_{t'})}{\piref(y_{t'})}  \nabla\log\pi(y_s) \right].
\end{eqnarray*}

The triple sum $\sum_{t=1}^T \sum_{t'=1}^T \sum_{s=1}^T$ can be decomposed into a few parts
\begin{itemize}
    \item The sum over terms $1\leq t, t'<s\leq T$ which are zero, since $\E_{y_s}\left[\nabla\log\pi(y_s|y_{1:s-1})\right]=0$.
    \item The sum over terms $1\leq s\leq t,t'\leq T$, which can be regrouped as $\sum_{s=1}^T \nabla\log\pi(y_s) \left( \sum_{t=s}^T \log\frac{\pi(y_t)}{\piref(y_t)} \right)^2 $.
    \item The sum over terms $1\leq t<s<t'\leq T$ and $1\leq t'<s<t\leq n$, which can benefit from computing explicitly one-step expectations (similarly as what has been done in Section~\ref{sec:token.level.gradient.estimate}), which gives 
    $$2 \sum_{t=1}^T \log\frac{\pi(y_t)}{\piref(y_t)} \sum_{s=t+1}^T \nabla\log\pi(y_s)\sum_{t'=s+1}^T \KL(\pi_{t'}, \pi_{\text{ref}, t'}).$$
\end{itemize}

Putting everything together, the low-variance estimate of $g(\pi)$ is
\begin{eqnarray*}
\hat g &=& \sum_{s=1}^T \nabla\log\pi(y_s)
\left[ \left( \sum_{t=s}^T \log\frac{\pi(y_t)}{\piref(y_t)} \right)^2  + 2 \sum_{t=1}^{s-1} \log\frac{\pi(y_t)}{\piref(y_t)} \sum_{t'=s+1}^T \KL(\pi_{t'}, \pi_{\text{ref}, t'}) \right].
\end{eqnarray*}
While such low-variance estimates are desirable in principle, they can be more sophisticated to implement when accounting for other practical implementation trade-offs.

\end{appendix}

\end{document}